\PassOptionsToPackage{dvipsnames}{xcolor}
\documentclass[sigconf,natbib=true]{acmart}

\usepackage{algorithm}
\usepackage{algorithmic}

\usepackage{ccaption}
\usepackage{subcaption}

\usepackage{physics}
\usepackage{tikz}
\usepackage{mathdots}
\usepackage{cancel}
\usepackage{color}
\usepackage{siunitx}
\usepackage{array}
\usepackage{multirow}
\usepackage{gensymb}
\usepackage{tabularx}
\usepackage{extarrows}
\usepackage{booktabs}
\usetikzlibrary{fadings}
\usetikzlibrary{patterns}
\usetikzlibrary{shadows.blur}
\usetikzlibrary{shapes}
\usetikzlibrary{trees}

\usepackage{wrapfig}
\usepackage{svg}

\usepackage{pgfplots}
\DeclareUnicodeCharacter{2212}{−}
\usepgfplotslibrary{groupplots,dateplot}
\usetikzlibrary{patterns,shapes.arrows}
\pgfplotsset{compat=newest}

\usepackage{adjustbox}
\usepackage{todonotes}
\usepackage{hyperref}

\DeclareMathOperator*{\argmin}{arg\,min}

\acmConference[KDD Workshop on Machine Learning in Finance]{KDD Workshop on Machine Learning in Finance}{August 4, 2025}{Toronto, Canada}

\title[Attacks on Financial Reporting via Maximum Violated Multi-Objective]{Adversarial Machine Learning Attacks on Financial Reporting via Maximum Violated Multi-Objective Attack}

\author{Edward Raff}
\email{Raff\_Edward@bah.com}
\affiliation{%
  \institution{Booz Allen Hamilton}
  \city{Syracuse}
  \state{NY}
  \country{} 
}
\affiliation{%
  \institution{University of Maryland, Baltimore County}
  \city{Catonsville}
  \state{MD}
  \country{} 
}

\author{Karen Kukla}
\email{kakukla@syr.edu}
\affiliation{%
  \institution{Syracuse University}
  \city{Syracuse}
  \state{NY}
  \country{} 
}

\author{Michel Benaroch}
\email{mbenaroc@syr.edu}
\affiliation{%
  \institution{Syracuse University}
  \city{Syracuse}
  \state{NY}
  \country{} 
}

\author{Joseph Comprix}
\email{jjcompri@syr.edu}
\affiliation{%
  \institution{Syracuse University}
  \city{Syracuse}
  \state{NY}
  \country{} 
}

\setcopyright{none}

\begin{document}

\begin{abstract}
Bad actors, primarily distressed firms, have the incentive and desire to manipulate their financial reports to hide their distress and derive personal gains. As attackers, these firms are motivated by potentially millions of dollars and the availability of many publicly disclosed and used financial modeling frameworks. 
Existing attack methods do not work on this data due to anti-correlated objectives that must both be satisfied for the attacker to succeed. We introduce Maximum Violated  Multi-Objective (MVMO) attacks that adapt the attacker's search direction to find $20\times$ more satisfying attacks compared to standard attacks.
The result is that in $\approx50\%$ of cases, a company could inflate their earnings by 100-200\%, while simultaneously reducing their fraud scores by 15\%.  By working with lawyers and professional accountants, we ensure our threat model is realistic to how such frauds are performed in practice. 
\end{abstract}

\maketitle

\section{Introduction}
Given a target function $f(\cdot)$ that takes an input $\boldsymbol{x}$ and output $\hat{y}$, it has been found that it is surprisingly easy for an adversary $\mathcal{A}$ to produce a perturbed $\boldsymbol{\Tilde{x}}$ that is trivially similar to input $x$, such that $\|\boldsymbol{\Tilde{x}} - \boldsymbol{x}\| \leq \epsilon$, and yet $f(\boldsymbol{\Tilde{x}}) \neq \hat{y}$. This has been achieved primarily for deep learning algorithms in classification tasks, such as computer vision (predict the label) and natural language processing (predict the next token). 

\begin{figure}[!h]
    \centering
    \includegraphics[width=1.0\columnwidth]{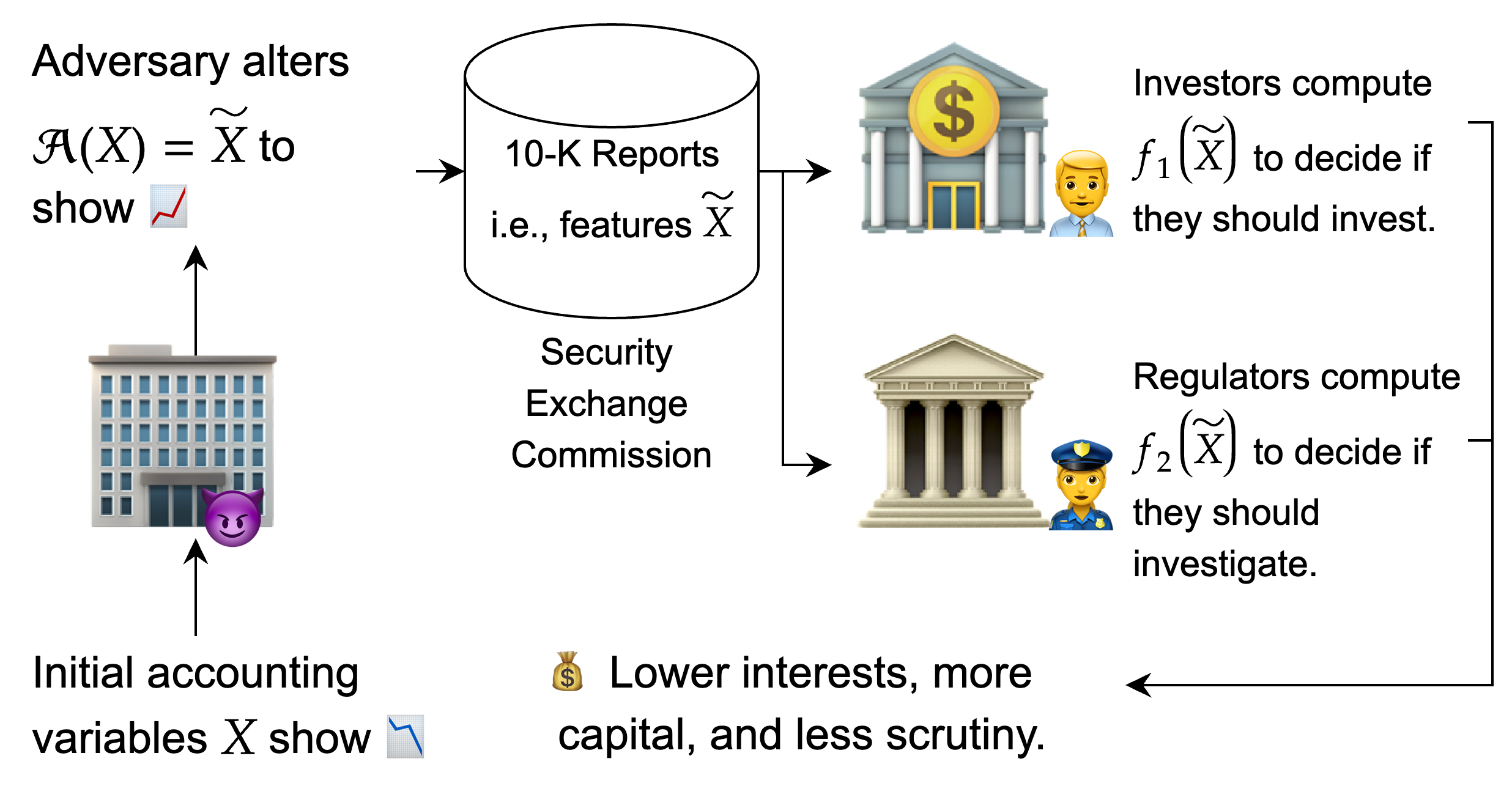}
    \caption{The overall threat model and scenario of this work. The adversary $\mathcal{A}$ is the (distressed) firm, which must file reports at a central database (run by the SEC). There are two models that need to be manipulated to achieve the adversarial firm's goals. These reports are used by investors to compute $f_1(\cdot)$ to decide if they should purchase shares or offer better loan terms (i.e., firm gains capital). Regulators also use these reports to run a fraud model $f_2(\cdot)$, to determine if investigation or other regulatory scrutiny of the firm is necessary.   }
    \label{fig:scenario}
\end{figure}

In this work, we seek to study a new problem space for adversarial machine learning (AML): \textit{fraudulent financial reporting}. This task has many differences from standard applications that make it interesting from a purely machine-learning perspective while simultaneously being highly relevant to real-world use (i.e., regulators need to know the extent of what is possible). Financial reporting fraud already occurs and is motivated by millions-to-billions of dollars in potential impact. 
Modeling frameworks used in this space are primarily linear in nature, making them a significantly different surface than most prior work, and the outputs $y$ are usually real-valued regression problems instead of classification. 

The decisions of investors, banks, and regulators can have significant impacts on a company in receiving new capital, loans, and scrutiny respectively. Each party makes these decisions in large part 
using models based on
financial ratios~\citep{altman_financial_1968}, which are computed from a financial statement called a 10-K. This 10-K is filed with the Security and Exchange Commission (SEC) every year and is produced \textit{by the company itself}. Thus, the company has the motivation and means to manipulate its financial reporting. 

To wit, our work is the first to study AML as applied to this problem. It is critical from a social perspective to understand the strengths and limitations of the current regulatory frameworks, i.e., Generally Accepted Accounting Principals (GAAP), with respect to potential risks as machine learning techniques become more widespread. Working with adversarial ML experts, two Certified Public Accountants (CPAs), and a lawyer, we build accurate and realistic threat models to apply AML methods to. Ultimately, we show positive and negative results for current GAAP-based accounting: in half of cases, adversaries can have extraordinary success in evading detection while inflating earnings, but in the other half of cases, they can only achieve one of these goals. 

We consider two sets of goals the adversary wishes to achieve: avoiding existing fraud models and inflating their reported earnings. Both goals are numeric targets $y$ with differing scales over orders of magnitude, making existing attack approaches ineffective. To demonstrate that an attack is still possible, we develop a novel paired numeric optimization that seeks to obtain an equal relative improvement in both goals. 
Working with domain expert accountants and lawyers, we develop a realistic threat model that defines an action-space for the threat model.

Our work uses two correlated metrics as exemplars: the earnings per share and a fraud model called M-score, which are prevalent in practical use. From a practical perspective, we show that the current environment is at risk of successful adversarial attacks where earnings are inflated by over 100\% while simultaneously reducing the fraud risk scores by 15\%. From an AML perspective, current techniques for adversarial regression do not succeed in this task, and we show how a multi-task regression attack can be constructed. Critically, it abides by the constraint that both earnings must go up, and the fraud score must go down, for the attack to be viable.

This article is organized as follows. First, we provide a brief primer on the relevant financial and accounting details needed to understand this work with related work in \S~\ref{sec:background}. Given this understanding, we can detail the threat model and its goals, attack strategy, and action space in \S~\ref{sec:threat_model}. Our novel MVMO attack is defined in \S\ref{sec:mvmo}. The results of our attacks are demonstrated on real-world financial data in \S~\ref{sec:results}, with ablation of adding a third objective to demonstrate MVMO's ability to handle additional targets.  Finally, we conclude in \S~\ref{sec:conclusion}.

This work is intrinsically interdisciplinary and thus covers a wide scope of material. We will first take a moment to review the context of the accounting background needed to understand the work and the accounting terminology that is necessary. 

\subsection{Accounting Context} \label{sec:background}

The set of players and goals in our work is outlined in \autoref{fig:scenario}.
There are two sets of actors in this work. \textit{First is the company (a.k.a. the firm)} that may engage in fraudulent reporting of its financial status, which tends to occur when a firm's financial health degrades~\citep{stolowy_accounts_2004,rosner_earnings_2003}. All publicly traded companies (in the United States, though most nations have a counterpart) must file a set of financial documents every fiscal year called a \textit{10-K}. A 10-K is freely available to anyone online and contains a balance sheet and income statement. These documents summarize the current net value of the firm and how the firm's net value changed (via revenue and costs) over the past year. \textit{Second are the investors (and regulators/lenders)}, who use the raw features from the 10-K to compute their own models, comparisons, and thus decisions based on the 10-K reports. 

From each 10-K there are several variables that are reported, with Generally Accepted Accounting Principles (GAAP) that dictate how companies should allocate expenses and revenues, both cash and debts, into the 10-K so that investors, banks, analysts, and others have a uniform and repeatable interface to judge the financial health of a company and compare it with competing firms. 
All variables involved in this work are provided in 
\autoref{tbl:variables} 
with industry abbreviations that will be used. 

\begin{table}[!h]
\caption{The set of all common accounting variables extracted from a 10-K that are used in this work. A complete understanding of all variables is not necessary and is provided as a reference so that the common accounting abbreviation can be expanded to its full definition. } \label{tbl:variables}
\centering
\begin{adjustbox}{max width=\columnwidth}
\begin{tabular}{@{}ll@{}}
\toprule
\multicolumn{1}{c}{Abreviation} & \multicolumn{1}{c}{Variable}                 \\ \midrule
AT                              & Total Assets                                 \\
ACT                             & Current Assets - Total                       \\
RECT                            & Net Receivables                              \\
INVT                            & Inventory                                    \\
PPENT                           & Net Property, Plant, \& Equipment            \\
PPEGT                           & Gross Total Property, Plant, \& Equipment    \\
LT                              & Liabilities - Total                          \\ 
LCT                             & Current Liabilities                          \\
DLTT                            & Long-term debt - Total                       \\
NI                              & Net Income                                   \\
SALE                            & Net Sales                                    \\
COGS                            & Cost of Goods Sold                           \\
DP                              & Depreciation                                 \\
AM                              & Amortization                                 \\
XSGA                            & Selling, General and Administrative expenses \\
OANCF                           & Operating Activities - Net Cash              \\
DVP                             & Dividends, Preferred                         \\
XSTF                            & Staff Expenses - Total                       \\
CSHO                            & Common Shares Oustanding                     \\
XAGT                            & Administrative and General Expenses - Total  \\
XEQO                            & Equipment and occupancy Expense              \\
XOPR                            & Operating Expense                            \\
\bottomrule
\end{tabular}
\end{adjustbox}
\end{table}

As the name \textit{balance} sheet implies, it is important that a number of the variables balance out to the same total value. Thus, one can not arbitrarily set every variable in a 10-K, because the numbers would not add up to the same total value. For the purposes of this work, a complete list of all variables that must balance, and how they interact, is given in \autoref{fig:variable_dependencies}. 

\begin{figure}[!h]
\begin{minipage}[t]{0.4\columnwidth}\vspace{0pt}%
\begin{tikzpicture}
[
    every node/.append style = {draw, anchor = west},
    grow via three points={one child at (0.5,-0.8) and two children at (0.5,-0.8) and (0.5,-1.6)},
    edge from parent path={(\tikzparentnode\tikzparentanchor) |- (\tikzchildnode\tikzchildanchor)}]
\node {LT}
    child { node {LCT}
    }
    child { node {DLTT}};
\end{tikzpicture}
\begin{tikzpicture}
[
    every node/.append style = {draw, anchor = west},
    grow via three points={one child at (0.5,-0.8) and two children at (0.5,-0.8) and (0.5,-1.6)},
    edge from parent path={(\tikzparentnode\tikzparentanchor) |- (\tikzchildnode\tikzchildanchor)}]
\node {AT}
    child { node {ACT}
        child {node {RECT} edge from parent [solid] } 
        child {node {INVT} edge from parent [solid] } 
    }
    child [missing] {}
    child [missing] {}
    child { node {PPENT}
        child {node {PPEGT} edge from parent [solid] } 
    };
\end{tikzpicture}
\end{minipage}
\begin{minipage}[t]{0.55\columnwidth}\vspace{0pt}%
\begin{tikzpicture}
[
    every node/.append style = {draw, anchor = west},
    grow via three points={one child at (0.5,-0.8) and two children at (0.5,-0.8) and (0.5,-1.6)},
    edge from parent path={(\tikzparentnode\tikzparentanchor) |- (\tikzchildnode\tikzchildanchor)}]
 
\node {NI}
    child { node {REVT}
        child {node {SALE} edge from parent [solid] } 
        child {node {OPRO} edge from parent [solid] } 
    }
    child [missing] {}
    child [missing] {}
    child { node {PI}
        child {node[dashed] {XOPR} 
            child {node[solid] {COGS} edge from parent [solid]  }
            child {node[solid] {XSGA} edge from parent [solid] }
        edge from parent [dashed]} 
    }
    child [missing] {}
    child { node {PVO} edge from parent [dashed]}
    child { node {AM} edge from parent [dashed]}
    child { node {DP}
        edge from parent [dashed]
    }
    child { node {XAGT}
        child {node[solid] {XEQO} edge from parent [solid] } 
        edge from parent [dashed]
    }
    ;

\end{tikzpicture}
\end{minipage}
    \caption{Solid line indicates a value-added into the parent value, dashed lines a value subtracted from the parent value. This is not a complete diagram, and some of the leaf nodes may actually have more children. For this work, we ignore the children when it is an unnecessary detail.  }
    \label{fig:variable_dependencies}
\end{figure}

Every leaf node of the tree in \autoref{fig:variable_dependencies} we shall term an \textit{atom}, in that it can be directly manipulated. However, manipulating any atom will alter all remaining variables further up in the hierarchy. For example, increasing the DLTT will also increase the LT by the same amount, because DLTT is a descendant of LT. 

A company self-reports all values in the 10-K to the appropriate regulators, and so the 10-K variables are the raw features that investors use to make decisions. This is commonly done by creating \textit{ratios} constructed from the various variables. For example, the \textit{current ratio} is defined as the current assets (AT) divided by current liabilities (LT). While non-linearities can be used in computing a ``ratio'', it is common for all terms to be addition, subtraction, multiplication, and division ~\citep{piotroski_value_2000}. 

The use of these ratios constitutes effective feature engineering on the part of the investors. Linear and logistic regressions, or the ratio itself, are used by investors/regulators as they are interpretable and common machine learning has not produced sufficiently superior results~\citep{hoglund_detecting_2012,sanad_machine_2021}. Thus, there is an actor (the firm) that has a method to alter the features (financial variables) used by other parties (the investors) with significant motivation.%

\subsection{Related Work}

Since the seminal work of \citep{healy_effect_1985} in 1985, fraud models in accounting have been predominantly based around linear models on manually engineered features of accounting variables, termed accounting ratios in practice. 
The vast majority of fraud models in use are thus followers of this overall style of linear models
trained on a small set of known fraud cases against a similarly sized reference population and then applied to future years/companies~\citep{jones_earnings_1991,defond_debt_1994,dechow_detecting_1995,dechow_quality_2002,hribar_errors_2002,rosner_earnings_2003,spathis_detecting_2002}.
The primary fraud model we will study is the M-score~\citep{beneish_detecting_1997}. Despite its age, it is one of the most widely used fraud models. It has been used reliably over decades for financial modeling/portfolio management ~\citep{beneish_predictable_2007,beneish_fraud_2012}, detected real-world incidents of fraud~\citep{noauthor_cornell_1998,rami_2017}, and is still actively modified/used as a benchmark in current accounting literature~\citep{narsa_fraud_2023,lu_research_2020}. As such, we use it as our target model to evade as a major representative of the models in use and one of the most prevalent types of models, if not individual models, used in practice. While linear models have been widely used in machine learning for high-dimensional problems and cases where privacy/provable guarantees are needed~\cite{10516654,wu2024stabilizing,Lu2022,10.1145/3714393.3726507,NEURIPS2023_72235260,NEURIPS2023_72235260,10.1145/3605764.3623910,Ng2004,Zou2005,Fan2008,Baracaldo:2017:MPA:3128572.3140450,Liu:2017:RLR:3128572.3140447}, their use in accounting is unique and interesting in the dependence on domain expert feature engineering that is not as prevalent in machine-learning use cases. Similarly, Bayesian probabilistic programming is often used to encode domain knowledge into the modeling of the algorithm, rather than the feature stage~\cite{Klein24,rubin_bayesianly_1984,Gelman2013,gelman_prior_2006}. The results we are are relatively robust compared to current computer vision~\cite{Biggio2017,Doldo_2025_CVPR,cina_sigma-zero_2024,floris_improving_2023,bryniarski_evading_2021,Rahnama2020,Carlini2017} and natural language processing~\cite{NEURIPS2024_70702e8c,das2025humanreadableadversarialpromptsinvestigation,Carlini2021,Carlini2018} susceptibility to adversarial attacks, despite not being developed for such attacks, is a strong indicator of the intrinsic value in accounting's focus on such feature engineering via accounting ratios and the importance of considering real-world or ``problem space'' attacks~\cite{10136152,raff2023dontneedrobustmachine,Pierazzi2020}.

From an AML perspective, our work deals with regression and multiple objectives. The use of AML in regression has received minimal study, as noted in~\citep{Nguyen_Raff_2019,Gupta_Pesquet_Popescu_Kaakai_Pesquet_Malliaros_Paris_Saclay_2021}. This poses issues when the gradient can change by orders of magnitude because the response changes by orders of magnitudes~\citep{10226611}, but is uniquely different in that the magnitude is an artifact of the information of the domain rather than an ``obfuscated gradient'' that confounds many defensive works in classification problems~\citep{Athalye2018}. 

In addition, little work considers multiple attack objectives simultaneously. Prior works either use a simple weighted average of loss terms~\citep{pmlr-v97-qin19a} and are designed for related/correlated objectives~\citep{williams2023black,bui2023generating,Yang_Xu_Zhang_Hartley_Tu_2024}, where our work has anti-correlated objectives that must be satisfied. 

\section{Threat Model} \label{sec:threat_model}

Following ~\citep{Biggio2014} we will now specify the threat model of our study. First is the high-level inputs, followed by the attacker's optimization targets. This includes a novel formulation of the adversarial regression problem to optimize two variables that must both be attacked successfully for the attack to be viable in practice. Then we will review the strategies that can be used to perturb the variables from the 10-K in a realistic manner. 

For the discussion, we will be using ratios that compare the current year to a previous year (denoted by a  subscript of ${}_{t-1}$). There will thus be $T$ different years of 10-K values and $T-1$ computed scores (excluding the first year). Each company has a different value of $T$ based on data availability. Three matrices are used to encode the data and attack. 
$X \in \mathbb{R}^{T,D}$ is the history of $T$ different financial reports of $D$ different variables. 
$P \in \mathbb{R}^{T,L}$  is the perturbation amounts applied to each of the $T$ different financial reports over time. There are $L$ options for each point in time that indicate which of $L$ different perturbation strategies will be used.  
$M \in \mathbb{R}^{L,D}$ is the sparse matrix of $L$ perturbation strategies mapped to the $D$ different variables that may be impacted.

\subsection{Attack Optimization Targets} \label{sec:threat_model_targets}

Here we detail what the target function to attack, $f(\cdot)$ is. Broadly, many possible targets could be the function of an adversarial attack with the goal of performing fraud. In financial literature, these may often be found by referring to \textit{earnings management} (EM). 
EM is not synonymous with fraud but covers a broad spectrum of financial decisions and accounting decisions that may be made with an eye toward outcomes more appealing to investors and other parties (e.g., a bank determining loan risk/rates). It is worth noting thought that EM can become fraud if pushed too far, and the scope of all possible EM that could be performed using adversarial ML is not the scope of this article.

Our focus is on the matter of two representative targets that a bad actor may target in performing financial fraud. We will detail each individually, which can be represented by \autoref{eq:target}. As we will demonstrate later in \autoref{sec:results}, the straightforward attack of either objective does not result in a holistic strategy that an adversary is likely to employ. For this reason, we will develop a novel strategy to combine both numeric objects into a single optimization target. 

\begin{equation} \label{eq:target}
    \argmin_P f(X + X \odot P M) \quad s.t. \quad |P| \leq \epsilon
\end{equation}

\subsubsection{Financial Target} The first, and intuitive target, is the earnings per share (EPS) of the company, defined by as the Net Income (NI) minus cash paid out to investors in the form of preferred dividends (DVP), divided by the number of common shares outstanding (CSHO). 
\begin{equation}\label{eq:eps}
    \text{Earnings Per Share} (\mathit{EPS}) = \frac{NI-DVP}{CSHO}
\end{equation}
Generally, the higher the EPS, the more attractive the stock looks to investors. This has multiple macro-incentives at the institution level: being able to raise more capital via the sale of shares, more leverage for the purchase of other companies, and more favorable loan terms from banks. On an individual level the potential bad actor, any actor who has meaningful compensation in the form of stock grants will stand to directly benefit from a higher EPS calculation. 

\subsubsection{Evasion Target: M-score} The high incentives to better earnings reports 
by means of
manual human effort in creating fraudulent 10-Ks has been a long-standing issue, and it can take years for fraud to be detected (if at all). For this reason, many have developed models for attempting to detect the risk of fraud or earnings 
manipulation.
We select the M-Score ~\citep{beneish_detecting_1997,beneish_detection_1999} as a representative model to try and evade because the M-score has been in use for several decades~\citep{beneish_predictable_2007,beneish_identifying_2009,beneish_fraud_2012} and was even used by business students to identify Enron's collapse a year in advance ~\citep{noauthor_cornell_1998}. 
The model calculates an M-score based on
a number of known and custom financial ratios (i.e., manual feature engineering) computed from the 10-K statements on a year-over-year basis.
The M-score indicates the degree of manipulation in earnings by a company. For example, a score of $-$2.50 suggests a low likelihood of manipulation, whereas a score exceeding $-$1.78 suggests that the company is likely to be a manipulator. 
Like most models developed in the financial literature~\citep{piotroski_value_2000}, it is a linear model over the ratios. In this particular case, there are eight ratios with corresponding covariates (and bias term) defined as \autoref{eq:m-score}.

\begin{equation} \label{eq:m-score}
\begin{aligned}
    \text{M-Score} =&  -4.84 + 0.92 \cdot \mathit{DSRI} + 0.528 \cdot  \mathit{GMI} + \\
                    & 0.404\cdot \mathit{AQI} + 0.892\cdot \mathit{SGI} + 0.115\cdot \mathit{DEPI} \\
                    & - 0.172\cdot \mathit{SGAI} + 4.679\cdot \mathit{TATA} - 0.327\cdot \mathit{LVGI}
\end{aligned}
\end{equation}

The eight ratios of the M-score are defined 
below. 
Due to space limitations, we will not review each individually. 

In each equation, $t-1$ is used to represent the previous year's value of the variable, and no subscript is used for the current year. 

$$\text { Days Sales In Receivables Index } (\mathit{DSRI}) =\frac{\frac{\mathit{RECT}}{\mathit{SALE}}}{\frac{\mathit{RECT}_{t-1}}{\mathit{SALE}_{t-1}}}$$

$$
\text { Gross Margin Index } (\mathit{GMI}) =\frac{ \frac{\mathit{SALE}_{t-1}-\mathit{COGS}_{t-1}}{\mathit{SALE}_{t-1}}  }{ \frac{\mathit{SALE}-\mathit{COGS}}{\mathit{SALE}} }
$$

$$
\text { Asset Quality Index } (\mathit{AQI}) =\frac{1-\left(\frac{\mathit{ACT}+ \mathit{PPENT}}{\mathit{AT}}\right)}{1-\left(\frac{\mathit{ACT}_{t-1} + \mathit{PPENT}_{t-1}}{\mathit{AT}_{t-1}}\right)}
$$

$$
\text { Sales Growth Index } (\mathit{SGI}) =\frac{\mathit{SALE}}{\mathit{SALE}_{t-1}}
$$

$$
\text { Depreciation Index } (\mathit{DEPI}) =\frac{\frac{\mathit{DP}_{t-1} + \mathit{AM}_{t-1}}{\mathit{DP}_{t-1} + \mathit{AM}_{t-1} + \mathit{PPENT}_{t-1}}}{\frac{\mathit{DP + AM }}{\mathit{ DP + AM+PPENT }}}
$$

$$
\text { SGA Index } (\mathit{SGAI}) =\frac{\frac{\mathit{XSGA}}{\mathit{SALE}}}{\frac{\mathit{XSGA}_{t-1}}{\mathit{SALE}_{t-1}}}
$$

$$
\text { Leverage Index } (\mathit{LVGI}) =\frac{\frac{\mathit{DLTT+LCT}}{\mathit{AT}}}{\frac{\mathit{DLTT}_{t-1} + \mathit{LCT}_{t-1}}{\mathit{AT}_{t-1}}}
$$

$$
\text { Total Accruals to Total Assets } (\mathit{TATA}) =\frac{\mathit{NI}- \mathit{ONCAF}}{\mathit{AT}}
$$

Note that there are both basic variables that are sub-dividable into further components (e.g., the Net Income (NI) used in TATA is constructed from most other variables), and other variables that are final atoms that can not be subdivided. For this reason, a naive strategy of attempting to minimize/maximize each individual ratio may not be effective because it would alter other variables in the hierarchy. Second, most of the ratios are relative year-over-year comparisons, and so greedily altering one year may impact a previous year's calculation. This is a critical consideration in the performance of earnings management because it may create the appearance of volatility in the company's performance, which is itself a negative signal used by analysts~\citep{simko2020financial}.

\subsubsection{Secondary Fraud Score}

The Earnings Per Share (EPS) and Benish's M-Score represent two anti-correlated targets to be minimized simultaneously. Though these are the primary targets of analysis, as EPS is a fundamental business metric and the M-Score is one of the most widely used fraud models, we will include a second fraud detection model as a third objective to be optimized in extended testing. This will allow us to demonstrate that our MVMO algorithm can succeed with more targets simultaneously.

In particular, we will use another well-used accounting fraud model proposed by  Charalambos T. Spathis \citep{spathis_detecting_2002}, which we will term as the \textit{S-Score} going forward. The S-Score model was selected as the coefficients are publicly available, and its relative contrast with the M-score is that it has only three covariates in the model specification and thus far has fewer accounting ratios compared to many alternative models. It's compact specification is given 
by 
$\text{S} =  1.250 + 2.252 \cdot \mathit{INVT}/\mathit{SALE} 
                     -33.029 \cdot  \mathit{NI}/\mathit{AT} 
                     - 6.878\cdot \mathit{WC}/\mathit{AT}$.

One could hypothesize our success against the M-Score is achieved only due to a large number of terms and thus complex interactions between them, as each accounting variable in \autoref{eq:m-score} is composed of the sum of many atoms and thus creates more of the non-linearities that are exploited in neural networks for evasive purposes. By showing that the S-Score also succumbs to our MVMO algorithms, we show a three-way joint optimization and that simpler fraud models are equally at risk. 

\section{Maximum Violated  Multi-Objective Attack} \label{sec:mvmo}
Notably, all of the targets of our adversarial attack are real-valued functions. Most literature on AML is performed on classification tasks, with relatively little work done on regression problems~\citep{Nguyen_Raff_2019,Gupta_Pesquet_Popescu_Kaakai_Pesquet_Malliaros_Paris_Saclay_2021}. In our case, the range of EPS and M-score calculations differ by up to 1000$\times$, making the scale matching to optimize both together difficult. Also problematic is that the EPS and M-score both range from $-\infty$ to $\infty$, and in all cases, we wish to maximize EPS and minimize the M-score.

Our results will show that simply optimizing for the average of multiple least-squares objectives will not produce a satisfying outcome. Thus, our strategy will focus on the nature of the threat model: \textit{all objectives must be satisfied for the result to satisfice}. 
If EPS increases (good) but the M-score also increases (bad), that means the fraud is more likely to be detected and so undesirable. Similarly, if M-score decreases but the EPS decreases, then there is a low desire to perpetrate the fraud due to negative outcomes. 
In our threat model, there is no amount by which EPS may increase that will make up for an increase in the M-score. For this reason we will focus on a gating strategy that will depress any objective that is satisfied in favor of other objectives that are unsatisfied. 

Second, since we have objectives of infinite support, there is the real-world consideration that they may have different sensitivities. That is to say, it may be easier to change one objective by a large magnitude than another objective. To balance this, we will instead focus on the magnitude of the changes made by taking the transformation $g(\Delta) = \operatorname{sign}(\Delta) \log(|\Delta|+1)$, where $\Delta$ is the change in the objective compared to its original value. This transformation will maintain the sign of the change that has occurred (hence the +1 so that the origin remains the same pre/post-transformation) but only increase logarithmically. In this way, a more sensitive function will not gain undue favor in the numerical optimization. 

This leads to our numeric optimization (shown for two variables for illustration purposes) in \autoref{eq:target_joint}, where we compute the transformation of EPS and M-score (M)  as $\alpha$ and $\beta$, respectively. A larger EPS is better, so the minimization goal is $-\alpha$. A smaller M-score is better, so $\beta$ can be optimized directly. This is converted to a final optimization score by taking the weighted average of $-\alpha$ and $\beta$ via the softmax function. 

\begin{equation} \label{eq:target_joint}
\begin{aligned}
\alpha = g(\mathit{EPS}(X) - \mathit{EPS}(X + X \odot P M)) \\
\beta = g(\mathit{M}(X) - \mathit{M}(X + X \odot P M)) \\
\argmin_P  [-\alpha, \beta]^\top \operatorname{Softmax}([-\alpha, \beta] \cdot C) \quad s.t. \quad |P| \leq \epsilon
\end{aligned}
\end{equation}

The softmax is computed so that more weight is placed on the metric that is currently performing worse by our goals. e.g., if $\alpha = 100$, and $\beta= 5$, we have already succeeded in making the EPS significantly higher, but have done so at increased risk of detection. So the $\operatorname{Softmax}([-100, 5]) \approx [0, 1]$ will result in $-100 \cdot 0 + 5 \cdot 1 = 5$, so that the optimization effort is spent on reducing the M-score. 

\begin{algorithm}
\caption{Maximum Violated  Multi-Objective Attack} \label{algo:mvmo}
\begin{algorithmic}
\REQUIRE Feature set $x$, with $K$ victim functions $f_{1}(\cdot), \ldots, f_{K}(\cdot)$ to minimize. Maximum number of iterations $T$, and constraint projection $\pi(\cdot)$. Exaggeration constant $C$ (initialized to 1 in our tests). 
\STATE $x' \gets x$ 
\STATE $\mathit{d}_0 \gets [f_{1}(x'), f_{2}(x'), \ldots, f_{K}(x')] $ \COMMENT{\textcolor{blue}{Reference objectives needed to ensure we do not over-optimize one at the expense of others.}}
\FOR{$t = 1$ \textbf{to} $T$ }
    \STATE $\mathit{d}_t \gets [f_{1}(x'), f_{2}(x'), \ldots, f_{K}(x')]$ \COMMENT{\textcolor{blue}{$d$ holds the directions that each value takes to make both a minimization goal.}}
    \STATE $\mathit{\chi} \gets \operatorname{sign}(\mathit{d}_t-\mathit{d}_0) \odot \log\left(|\mathit{d}_t-\mathit{d}_0|+1\right)$  \COMMENT{\textcolor{blue}{Transform the targets to be insensitive to order-of-magnitude differences.}}
    \STATE $\mathcal{L} \gets \mathit{\chi}^\top \operatorname{Softmax}(\mathit{\chi} \cdot C)$ \COMMENT{\textcolor{blue}{Final optimization gibes the least-improved the most weight.}}
    \STATE $x' \gets  \pi(x' + \nabla_{x'} \mathcal{L})$
\ENDFOR
\STATE \textbf{return} $\delta = x-x'$ \COMMENT{\textcolor{blue}{Adversarial perturbation found}}
\end{algorithmic}
\end{algorithm}

This forms the inspiration and method of our Maximum Violated  Multi-Objective (MVMO), which focuses optimization on only the objectives that are currently unsatisfied relative to the others, as shown in \autoref{algo:mvmo}. The projection function $\pi$ keeps the solution constrained to the feasible attack space, and significantly improves upon naive PGD attacks. Note that in the case of perfectly correlated victim functions where $f_{i}(x) = f_{j}(x)$, our method will reduce to standard PGD as the softmax will compute the simple average in such cases. Our approach is simple to implement, and as we will show, significantly more effective in practice.

\subsubsection{Attacker Constraint}

We have now enumerated the goals of the attacker. The constraints are simply to specify a limit on $\epsilon$, which is interpretable as the maximum relative change allowed to any atom in the original 10-K report. 
A relative change in a variable by over 70\% occurs in real-life fraud~\citep{rami_2017}. For this reason, we will consider a maximum limit of $\epsilon \leq 40\%$ so that our results are well within the scope of perturbations committed in real-world situations. 

Note that some atom/leaf variables in our model, like DLTT, actually have more children in the complete specification of the financial variables. By treating DLLT and others as an atom instead, \textit{we effectively reduce the total strength of a true theoretical attack}. Consider that DLTT may be 0, when in actuality it has two large values $Z$ and $-Z$ that contribute to its calculation. In this case, the net score is 0, and any percent change will be zero, so no valid perturbation is possible in our threat model. In actuality, one of the two sub-components could have been modified, thus increasing the total set of valid attacks. Our simplification is done as a matter of exposition (we do not wish to teach the reader all of the accounting) and practicality (the sub-accounts are not always used by all companies, making the code more complex). 

We note that it may seem uncessary for an iterative gradient attack due to the linear nature of accounting variables (addition, subtraction, multiplication, and division). In fact, to keep the attack feasible, it is key to constraint the year-to-year perturbatinos to be consistent, which makes the problem non-convex.
\begin{theorem}
    Given a function $f(x) = \sum_{t=1}^{T} \alpha_t \frac{\beta_{t-1}}{\beta_{t}}$ where $\beta_t$ is the $t$'th year's accounting variable, making $\beta_{t-1}/\beta_t$ the accounting ratio, and $\alpha_t$ the weight of the variable. The objective $f(x)$ is non-context over the sum of years $T$.
\end{theorem}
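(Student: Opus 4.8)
I read ``non-context'' as ``non-convex,'' matching the sentence preceding the theorem, and I take the decision variables to be the reported yearly values $\beta_0,\beta_1,\dots,\beta_T$ (equivalently the per-year relative perturbations $p_t$, with $\beta_t=\beta_t^{(0)}(1+p_t)$), on a natural domain where no denominator vanishes, such as the positive orthant, which is convex. The role of the year-to-year \emph{consistency} constraint is precisely to make each $\beta_t$ a single shared variable that appears both as a denominator in the term $\alpha_t\beta_{t-1}/\beta_t$ and as a numerator in the term $\alpha_{t+1}\beta_t/\beta_{t+1}$, rather than as two independent quantities. The plan is to compute $\nabla^2 f$ and exhibit one point of the domain at which it is not positive semidefinite; since a twice-differentiable function on a convex set is convex if and only if its Hessian is positive semidefinite throughout, a single such point proves non-convexity. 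Because $\beta_t\mapsto p_t$ is affine, the conclusion then transfers verbatim to the $P$-parametrization that the attack actually optimizes.

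First I would differentiate $f=\sum_{t=1}^{T}\alpha_t\,\beta_{t-1}/\beta_t$ term by term. The $t$-th summand is affine in $\beta_{t-1}$ and contributes to the Hessian only $\partial^2 f/\partial\beta_t^2 = 2\alpha_t\beta_{t-1}/\beta_t^3$ and $\partial^2 f/\partial\beta_{t-1}\partial\beta_t = -\alpha_t/\beta_t^2$, so $\nabla^2 f$ is tridiagonal. A symmetric matrix that is positive semidefinite has all of its principal minors nonnegative, so it suffices to find one adjacent pair $(\beta_{t-1},\beta_t)$ whose $2\times2$ principal submatrix $H$ has $\det H<0$. Collecting the two summands that touch the pair $(\beta_1,\beta_2)$ (with $\beta_0$ the fixed base year and $T\ge2$; a fully interior pair behaves the same way when $T\ge3$), $H$ has diagonal entries $2\alpha_1\beta_0/\beta_1^3$ and $2\alpha_2\beta_1/\beta_2^3$ and off-diagonal entry $-\alpha_2/\beta_2^2$, whence $\det H = \frac{\alpha_2}{\beta_2^3}\!\left(\frac{4\alpha_1\beta_0}{\beta_1^2}-\frac{\alpha_2}{\beta_2}\right)$. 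This is strictly negative as soon as $\beta_2$ is small relative to $\beta_1$; for instance $\alpha_1=\alpha_2=\beta_0=\beta_1=1$ and $\beta_2=1/10$ gives $\det H=-6000<0$, all values lying well inside any realistic accounting range (and, by rescaling the unconstrained weight $\alpha_2$, inside any prescribed $|P|\le\epsilon$ box as well). Hence $\nabla^2 f$ is indefinite there and $f$ is not convex.

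The part to be careful about — and the one place the bare statement is slightly too strong — is the genuinely degenerate case in which every ``previous-year'' factor is held fixed, or every weight $\alpha_t$ with $t\ge2$ vanishes: then $f$ reduces to a sum of terms $\alpha_t\beta_{t-1}^{(0)}/\beta_t$, each convex in its own variable, and $f$ is convex. So I would phrase the claim as ``$f$ is in general (generically) non-convex'' and let the proof supply the explicit witness above, which is all that is needed to justify an \emph{iterative}, non-convex-aware attack rather than a closed-form or linear-programming one. I would also add the remark that this convex special case is exactly the one excluded by the year-to-year consistency requirement, which is the point the theorem is meant to support. The only remaining routine bookkeeping is the endpoints $t=1$ and $t=T$, where one of the two adjacent summands is absent; this is already covered by the pair $(\beta_1,\beta_2)$ used above (or by an interior pair when $T\ge3$). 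The calculus itself is mechanical; the one conceptual step is identifying the correct decision-variable structure, so that the ratio term contributes its indefinite Hessian block instead of a harmless $1/\beta$ term.
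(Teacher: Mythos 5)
Your proof is correct and reaches the same conclusion by the same general strategy (exhibit a point where the Hessian fails to be positive semidefinite), but the witness you use is genuinely different from the paper's. The paper simply computes the diagonal entry $\partial^2 f/\partial\beta_i^2 = 2\beta_{i-1}\alpha_t/\beta_i^3$ and observes it is negative whenever $\beta_i<0$, which immediately kills positive semidefiniteness. You instead restrict to the positive orthant and produce a negative $2\times 2$ principal minor from the off-diagonal coupling $-\alpha_t/\beta_t^2$. Your route is more work but buys two things the paper's argument does not: (i) it establishes non-convexity even when all accounting variables are positive, whereas the paper's witness needs a sign change in $\beta_i$ (and the set $\{\beta_i\neq 0\}$ containing both signs is not convex, so the second-order characterization of convexity is being applied somewhat loosely there); and (ii) it makes explicit that the non-convexity comes from the year-to-year \emph{coupling} of shared variables, which is exactly the structural point the theorem is invoked to support, rather than from the sign of a single diagonal term. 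Your caveat about the degenerate convex case (all cross-year weights vanishing) is a reasonable refinement the paper does not address. One small note: your identification of ``non-context'' as a typo for ``non-convex'' is the intended reading.
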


\begin{proof}
    The diagonal of the hessian is $\frac{\partial^2 }{\partial \beta_i^2} \left( \sum_{t=1}^{T} \alpha_t \frac{\beta_{t-1}}{\beta_{t}} \right) = \frac{2 \beta_{i-1} \alpha_{t-1}}{\beta_i^3}$, for which $\frac{2 \beta_{i-1} \alpha_{t-1}}{\beta_i^3} < 0$ whenever $\beta_i < 0$. We thus fail to satisfy the necessary condition that $f''(x) > 0 \forall x$ for $f(x)$ to be a convex function.
\end{proof}

\subsection{Perturbation Strategies} \label{sec:threat_model_preturbation}

We will detail the strategies that we encode into the threat model below. In each case, we have two corresponding variables that are added to the matrix $M$, with each row representing one pair of variables mentioned below. For example, if the relationship of the $i$'th variable is an anti-correlated impact on the $j$'th variable, the relationship will be encoded as $M[r,i] = 1$ and $M[r,j] = -1$, where $r$ is an arbitrary row that is holding the current relationship. Thus when the matrix product is taken with $P$, the corresponding $i,j$ variables will be altered in a one-to-one relationship. 

By the nature of how $P$ and $M$ are represented, the strategies used in this section are single-year strategies, actions that can be taken in one year to influence the final reported 10-K, and thus the ratios that will be used to evaluate the company. This is a realistic threat model in that a bad actor could apply these techniques to optimize for the current year's 10-K. While more complex multi-year strategies are possible, they become more dependent on the degree of long-term financial planning conducted at each firm, which can be industry and Chief Financial Officer (CFO) specific. 

To ensure that our strategies are plausible to a real-world threat model, they were constructed with guidance and collaboration with domain experts: including processional certified public accountants (CPAs) and lawyers. It is critical that we emphasize that these are not legal for someone to perform, but reflect strategies observed in real-world financial fraud. 

\subsubsection{Cost of Goods Sold (COGS) based Strategies} COGS is in general expenses incurred that are directly traceable back to the production of a product (e.g., physical materials the product is made from). It, along with Selling, General, \& Administrative (XSGA) expenses can have significant overlap in the nature of their accounting: both at a high level are tracking expenses related to the production of revenue, the difference is where the expense is traceable to. However, this similarity means an expense can be easily moved between the two accounts. Both COGS and XSGA are used in the Beneish-M score, impacting different variables, and so incentivizes allocation of the expenses to the minimally penalized variable. For the same logical reasons, a third account for Staff Expenses (XSTF) can also be used to reallocate funds, while the XSTF is less frequently used in standard financial ratios. Thus for a \$1 increase in COGS, the adversary can alter XSGA or XSTF by -\$1 in this their model (and in the reverse as well). 

A second variable that is not obviously connected to COGS is the Inventory (INVT). This is because, on a firm's balance sheet, INVT and COGS are expected to balance via a third factor as:
$\mathit{COGS} = \text{Beginning Inventory} + \text{ Purchases } - \text{Ending Inventory}$.

Where the inventory sold during the year produces the revenue (and corresponding tracked expense) that goes into the calculation of COGS. However, tracking inventory is notoriously error-prone. This thus makes it a convenient place for an adversary to intentionally ``poorly track'' their inventory to inflate/deflate it as desired, and thus alter the COGS. Thus, a \$1 increase in COGS can also be achieved by a -\$1 change to INVT, and vice versa. 

\subsubsection{Debt Term Alteration} Debt is categorized into Current Liabilities (LCT) that are due within the year, and Lon-term debt (DLTT) that is due in more than a year's time. DLTT becomes LCT as the debt nears its due date. For a bad actor, so long as they pay all debts due, they can miss-report DLTT as LCT by claiming in the following year that the LCT was paid off, but new DLTTs were acquired after the year. Similarly, LCT can be reported as DLTTs by that are explained away the following year as being paid off early. In each case, the debtors are paid on time, but the accounting is done fraudulently. Thus DLTT and LCT have a \$1/-\$1 invertible relationship. 

\subsubsection{Phantom Sales} A firm's revenue (SALE) when reported on a 10-K does not reflect \textit{just} literal cash received in the fiscal year. Obligations from other parties to pay the firm in the near future, but that have not yet been actually paid, also contribute to SALE. This is accounted for with the Accounts Receivable (RECV), and so fraudulent SALE volume can be achieved by adding a corresponding amount of RECV.  The strategy thus has a \$1/\$1 relationship in $M$. This could be explained away the following year by claiming that the purchaser fell through and failed to make payment. 

\subsubsection{Hiding Property Expenses} The previous strategies mentioned are all contained within the balance sheet of a 10-K, which is a reflection of the company's net value from inception accumulated to the current fiscal year. To alter the income sheet of a 10-K, we can use the property-related reports. Equipment and Occupancy Expense (XEQO, on the income statement) that accounts for maintenance of Property, Plant, and Equipment (PPEGT, on the balance sheet) provides a direct connection. Altering XEQO by -\$1 (and thus, increasing net income) by claiming it instead as the purchase of new PPEGT (and thus increasing it by \$1) creates a double-dipping effect on the total finances. 

\subsubsection{Depreciation Tax Shield} Depreciation (DP) in accounting is the accumulation of ``negative'' value to an asset already purchased. This is valuable financially because DP has a ``tax shield'' effect, where DP is removed from the total income for tax calculation purposes, even though DP does not correspond with a cash transfer in the current year. DP calculations are highly subjective, alterable, and can be challenging to track. A company can fraudulently claim accelerated or non-existent DP to obtain an undue tax credit. This is encoded as just a single $M[r,DP] = 1$ in the perturbation strategy matrix.

\section{Results} \label{sec:results}

\begin{figure*}
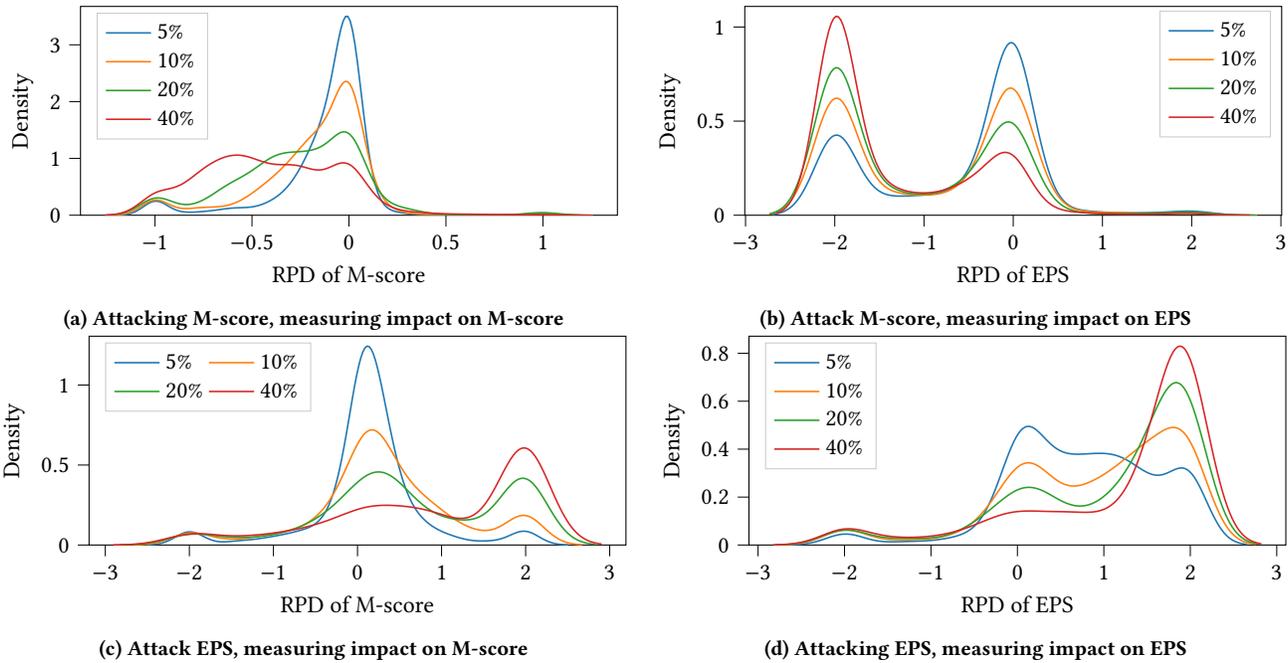

    \begin{subfigure}{0.49\textwidth}
        \centering
        \input{figures/attackM_rpd_M}
        \caption{Attacking M-score, measuring impact on M-score}
    \end{subfigure}
    \begin{subfigure}{0.49\textwidth}
        \centering
        \input{figures/attackM_rpd_eps}
        \caption{Attack M-score, measuring impact on EPS}
    \end{subfigure}
    \begin{subfigure}{0.49\textwidth}
        \centering
        \input{figures/attackEPS_rpd_M}
        \caption{Attack EPS, measuring impact on M-score}
    \end{subfigure}
    \begin{subfigure}{0.49\textwidth}
        \centering
        \input{figures/attackEPS_rpd_eps}
        \caption{Attacking EPS, measuring impact on EPS}
    \end{subfigure}
    \caption{Kernel Density plot results when using the simple strategies of attacking just the M-score (top row) or just EPS (bottom row) for differing values of $\epsilon$. The x-axis in each case is the RPD (1 = 100\%) showing the effect on the dataset. The EPS goes down when targeting M-score (top-right), which defeats the purpose of the adversary committing the fraud, making it non-viable. Attacking EPS is easily detected by raising the M-score (bottom-left). }
    \label{fig:simple_target_results}
\end{figure*}

To produce our results, we implement our model in the JAX~\citep{jax2018github} framework. JAXOPT~\citep{jaxopt_implicit_diff} is used to perform the adversarial search using projected gradient descent (PGD)~\citep{Madry2018} in order to respect the $\epsilon$ limits. We will evaluate values of $\epsilon \in \{5\%, 10\%, 20\%, 40\%\}$. Each variable is encoded with respect to \autoref{fig:variable_dependencies} such that any perturbation to one atom/leaf node automatically propagates to all other calculations. This allows implementation via automatic differentiation through the financial ratios. Our data is sourced from ~\citep{10.1111/1475-679X.12292}, which contains 402 companies that have confirmed years of actual financial fraud occurring. There are a cumulative 979 years of 10-Ks in the data to be altered. Each company has an average of 7.9 years of available data. We obtain the original 10-K data in a standard format via the Wharton Research Data Services ~\citep{WRDS} database. This gives us ideal data to evaluate if our attacks would reduce the M-score commonly used for the identification of companies that are at risk of earnings management. 

In each experiment, we will examine the percentage of years that an attacker could apply the MVO (or alternative) algorithm and satisfy the attacker's goal under the threat model: reducing M-Score and increasing EPS simultaneously.
Because sign changes occur, we use the Relative Percent Difference 
$RPD(a,b) = \frac{a-b}{(|a|+|b|)/2}$,
to compute the relative change in EPS and M-score, which would be problematic for the standard relative change calculation of $(a-b)/a$. 

As baselines, we will consider the SA-MOO multi-object approach of ~\cite{williams2023black}, which we find fails due to sensitivity to order-of-magnitude change in the objectives and over-focusses on the easier metric of EPS because it allows driving the cumulative reward up by orders of magnitude. Second, we consider the Manifold ELBO of ~\cite{Yang_Xu_Zhang_Hartley_Tu_2024}, which we find fails due to sampling noise in its stochastic estimation of the ELBO. Finally, we test three forms of PGD: attacking the M-score, EPS, and the average of M-score and EPS. In each case, PGD alone is insufficient. 

\begin{table}[!h]
\caption{Based on the attack method and $\epsilon$ (1st \& 2nd column), the percent that satisfy both (3rd) a lower M-score and higher mean EPS. The MVMO method success rate goes down slightly as $\epsilon$ increases, but the size of the average attack's influence increases (4th and 5th columns). Only the MVMO method of \autoref{eq:target_joint} satisfies a large number of cases.  } \label{tbl:results}
\centering
\begin{adjustbox}{max width=\columnwidth}
\begin{tabular}{@{}lrrrr@{}}
\toprule
\multicolumn{1}{c}{\begin{tabular}[c]{@{}c@{}}Attack\\ Method\end{tabular}} & \multicolumn{1}{c}{$\epsilon$} & \multicolumn{1}{c}{\begin{tabular}[c]{@{}c@{}}Satisfy\\ Goals (\%)\end{tabular}} & \multicolumn{1}{c}{\begin{tabular}[c]{@{}c@{}}Avg. M-score \\ RPD (\%)\end{tabular}} & \multicolumn{1}{c}{\begin{tabular}[c]{@{}c@{}}Avg. EPS \\ RPD(\%)\end{tabular}} \\ \midrule
SA-MOO & 5\% & 6.30    & 13.71 & 53.78 \\
SA-MOO & 10\% & 5.72    & 35.40 & 63.95 \\
SA-MOO & 20\% & 5.17    & 72.89 & 76.29 \\
SA-MOO & 40\% & 4.15    & 104.86  & 86.77 \\
Manifold ELBO & 5\% & 1.61    & 2.47 & 3.19 \\
Manifold ELBO & 10\% & 1.96    & 0.66 & 6.61 \\
Manifold ELBO & 20\% & 0.41    & 9.61 & 12.59 \\
Manifold ELBO & 40\% & 0.77    & 4.20  & 12.67 \\
PGD-M & 5\% & 4.60    & -12.04 & -60.31 \\
PGD-M & 10\% & 3.88    & -18.04 & -88.91 \\
PGD-M & 20\% & 2.66    & -26.43 & -110.42 \\
PGD-M & 40\% & 1.84    & -40.6  & -135.29 \\
PGD-EPS  & 5\% & 13.28   & 12.17  & 74.69 \\
PGD-EPS  & 10\% & 12.05   & 31.43  & 88.76 \\
PGD-EPS  & 20\% & 10.93   & 64.72  & 105.93 \\
PGD-EPS  & 40\% & 8.89    & 93.11  & 120.49 \\
PGD-Avg	&	5\%	&	2.66	&	-5.42	&	-42.93	\\
PGD-Avg	&	10\%	&	3.47	&	-20.92	&	-49.26	\\
PGD-Avg	&	20\%	&	3.88	&	-43.95	&	-58.72	\\
PGD-Avg	&	40\%	&	2.25	&	-55.83	&	-62.5	\\
MVMO & 5\% & \textbf{49.13} & -8.83 & 32.79 \\
MVMO & 10\% & \textbf{61.18} & -11.52 & 43.35 \\
MVMO & 20\% & \textbf{65.99} & -14.01 & 51.53 \\
MVMO & 40\% & \textbf{63.84} & -15.54 & 57.62 \\ 
\bottomrule
\end{tabular}
\end{adjustbox}
\end{table}
The simple attack baseline strategy from \autoref{sec:ablated_attacks}  that numerically evaluates one of either M-score or EPS are shown in 
\autoref{fig:simple_target_results} (i.e., target only one of the two outcomes). As can be seen, the two strategies are naturally correlated. Decreasing the M-score to evade detection also decreases the EPS. Since the purpose of the fraud is to falsely bolster the financial performance, the attack becomes undesirable. Similarly, attacking the EPS also raises the M-score, increasing the risk of fraud being detected. Thus, the naive strategy of attacking one single metric is inadvisable. 

We also tested a number of simple ablations for merging M-score and EPS targets. This includes taking a weighted average, testing increments of 10\% for the weighting, and taking the difference in the mean decrease in scores (measured over all the years of a company's data). In all cases, we observed one of the two scenarios outlined in \autoref{fig:simple_target_results}: M-score and EPS both decreasing or both increasing.

Our novel MVMO attack strategy that works to equalize the order-of-magnitude change in both goals performed significantly better. The results for all methods are shown in \autoref{tbl:results}, where the third column indicates the percentage of years that satisfy the goal of decreasing RPD while also increasing EPS. The MVMO optimization we propose in \autoref{eq:target_joint} is the only strategy that has any meaningful amount of success. PGD applied to just M-score (PGD-M) and EPS (PGD-EPS) each only achieve their target goal, at the cost of the other. Attempting to average the losses (PGD-Avg) performs even worse than naively ignoring a goal. Similarly, normalizing gradients, rescaling, and clipping did not improve performance. Our MVMO is the only mechanism able to satisfy the two anti-correlated targets simultaneously. 

Notably, the \textit{average} relative changes can be large, by up to an absolute 135\%. The EPS changes are large in magnitude in every case, likely because the EPS is one ratio where the M-score is a regression of multiple ratios, and so is more robust. Our MVMO attack does a better job of satisfying both goals by reducing the gap in their absolute magnitudes, which is the intended effect. A key takeaway from these results is that \textit{significant EPS manipulation is possible with no detectable impact on fraudulent activity}. 

That we can reach a  66\% success is also notable in the much higher success rates often seen in the deep-learning literature~\citep{Biggio2017}. This result is more congruent with older adversarial machine learning literature that studied simpler linear and kernel methods with higher intrinsic robustness~\citep{Biggio:2012:PAA:3042573.3042761}. 

A key baseline in \autoref{tbl:results} is PGD-Avg, which takes the average of the EPS and M-Score objective. This had a lower satisfying rate than naively optimizing just the EPS, at 4\% for PGD-Avg compared to 13\% for PGD-EPS. This demonstrates how the scale and individual target sensitivity can lead to unintuitive outcomes when not factored into the attack strategy accordingly. 

The ``price'' to pay for MVMO's success in this context is a reduction in the average RPD of the target metrics. Though this tradeoff is to be expected, the relative cost of the tradeoff is inconsequential to the attacker's perspective. Any increase in EPS at no increased risk is a win for the attacker, and the EPS change is only a factor of two smaller compared to the naive PGD-EPS optimizer. Similarly, the M-Score RPD reduction is approximately $2.5\times$ smaller for each budget of $\epsilon$. 

\begin{figure}[!h]
    \centering
    \input{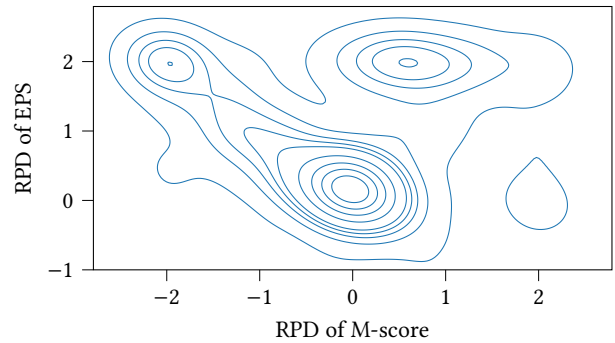}
    \caption{Two-dimensional Kernel Density Estimate (KDE) of the RPD scores for the M-score (x-axis, lower is better) and the EPS (y-axis, higher is better). 49\% of the density exists in the satisficing zone of the top-left quadrant, which are successful attacks satisfying both objectives. The KDE shows that the highest density region is at the center with no relative change to either variable, indicating a large number of points that are difficult to optimize. }
    \label{fig:kde_joint}
\end{figure}

One might also ask about the distribution of outcomes beyond a binary success/failure criterion. For example, if there are many M-Score RPDs that are positive but essentially zero, the risk of attacker success in practice may be greater still. 
This hypothesis is confirmed by a Kernel Density Estimate (KDE) plot of both objectives in \autoref{fig:kde_joint}. A large density near zero confirms the hypothesis and indicates that a large population of companies is challenging to optimize both EPS and M-score concurrently; however, many EPS increases can be obtained with minor M-score increases.  A key takeaway from this initial result is that \textit{financial ratios in use today are reasonably effective against adversarial attacks}. 
This result is largely explained by the theoretical work of \citep{10048547}, who showed that the risk of linear regression models is primarily a function of the number of features in the input. Because most financial models use a relatively small number of ratios, the total size of the attack space is constrained. 66\% is a much lower attack success rate compared to machine learning literature at large. This is in many ways a positive indicator of the tools that have evolved over time in accounting, which makes sense given the adversarial environment that had already formed between duplicitous firms and regulators/the market. The results are far from allowing accounting research or defensive AML research to ignore this application area though. 

At the same time, this amplifies the fact that the high-average impact on EPS is obtained from a second population that routinely obtains up to 200\% increases in EPS. This is a significant effect size that would have a material impact on all agents involved. This poses a non-trivial risk to investors, regulators, and the population at large whose investments may be held with these firms. The magnitude of the attack successes may inform regulators of the scope of risk, and the degree of attack success against different firms may help inform where to most judiciously apply their regulator actions. The results in simulation do not account for additiona actions a bad actor may take beyond simple 10-K reporting fraud if they had the tools available to perform these more advanced manipulation strategies, and could further re-structure their corporate operations to make AML more amenable to future year's reporting. 

\subsection{Ablating Additional Objective}

To evaluate our model when a second fraud model is in consideration, we use the S-Score \citep{spathis_detecting_2002}. EPS and M-Score are anti-correlated objectives, while M-Score and S-Score should be correlated as they are both fraud detection models. This answers two questions: 1) How well does an evasive 10-K report generalize to fraud models not previously seen? 2) Can MVMO successfully optimize three simultaneous objectives? 

\begin{table}[!h]
\caption{
Results of \autoref{algo:mvmo} when optimizing EPS and M-Score (MVMO-M), and optimizing an additional third objective for the M-score (MVMO-MS). Columns 
show the percentage of times that the respective objective was satisfied.
The S-Score is correlated with M-Score as another fraud model.
} \label{tbl:mvmo_three_objectives}
\resizebox{\columnwidth}{!}{%
\begin{tabular}{@{}lrcccc@{}}
\toprule
\multicolumn{1}{c}{Algo} & \multicolumn{1}{c}{$\epsilon$} & EPS\%  & M-score\% & S-Score\% & 3-way Joint\% \\ \midrule
MVMO-M                     & 5\%                           & 76.71 & 70.38    & 53.22    & 27.17        \\
MVMO-M                     & 10\%                          & 80.39 & 78.96    & 54.34    & 34.22        \\
MVMO-M                     & 20\%                          & 84.27 & 80.39    & 56.38    & 37.18        \\
MVMO-M                     & 40\%                          & 84.88 & 77.53    & 55.67    & 34.01        \\
MVMO-MS                    & 5\%                           & 77.63 & 68.64    & 87.64    & \textbf{42.59}        \\
MVMO-MS                    & 10\%                          & 80.18 & 78.24    & 89.58    & \textbf{54.14}        \\
MVMO-MS                    & 20\%                          & 84.27 & 81.31    & 92.34    & \textbf{62.31}        \\
MVMO-MS                    & 40\%                          & 85.70 & 78.86    & 93.46    & \textbf{62.72}        \\ \bottomrule
\end{tabular}%
}
\end{table}

The results of this experiment are shown in \autoref{tbl:mvmo_three_objectives}, where MVMO-M is the same model from \autoref{tbl:results} that optimized just two objectives: EPS and M-Score; MVMO-MS optimizes both M and S-scores.  MVMI-MS has almost equivalent satisfaction rate against three targets as MVMO did to just two, showing more objectives can be satisfied. As can be seen in the final ``3-way Joint'' column, MVMO-M has a non-trivial satisfaction rate of 34\% against all three desiderata, even though it was never previously exposed to the S-Score. This result alone is encouraging from the attacker's perspective in that there is evidence they can evade detection from models that they have not previously considered. We remind the reader that the S-Score is a fraud detection model, so even a 50\% satisfaction rate against just the S-Score would be better than random guessing as the S-Score is supposed to detect fraud, which is actively being performed.

\section{Ablated Attack Objectives} \label{sec:ablated_attacks}

We note in this section prior attempts at implementing an attack, which necessitated our eventual design. First, we note the obvious strategy is to perform mean squared error (MSE) loss on the $-$EPS (i.e., maximize) and M-score (i.e., minimize) objectives. When restricted to just the M-score alone we found MSE based optimization ineffective due to over-optimizing just one year's M-score. This was exacerbated when attempting to add EPS, due to its different scale, and resulted in zero years where both EPS went up and M-score went down. This was repeated with the $L_1$ loss and the same issue was observed. 

Our second attempt to implement the attack was informed by noticing that the M-score for one particular year may become the target of optimization because it is so much lower than all other values, causing the optimization process to over-fit the year. This occurs because of the use of ratios in the M-score calculation, and if one year has a particulary small numerator/denominator in one year, it may become easier to leverage a unit change in the input for an out-sized impact on the average M-score across years. Thus, we used a maximum-violator policy and took only the maximum M-score over all $T$ years in each call to the objective function so that the worst violating year was optimized down. This requires proportionally $T$ more PGD iterations to influence all potential years, however, we found it the second-best strategy in our testing. Our baseline results in \autoref{sec:results} all use this strategy for M-score optimization, and use standard MSE-based loss for EPS optimization. 

Other similar strategies for the M-score were tested as well. This includes clipping M-scores to a minimum value that was iteratively lowered as a function of the larger M-score, using a power function to squash the larger magnitude M-scores into smaller ranges, and top-k M-score selection. All resulted in comparatively equivalent performance in practice, and so we keep the max selection as our initial baseline for ease of implementation. 

\section{Conclusion} \label{sec:conclusion}

In this work, we have demonstrated the applicability of AML to financial fraud modeling. Using a novel multi-goal optimization strategy that is designed to handle anti-correlated objectives, 
we are able to demonstrate significant $\approx50\%$ of years a company can inflate earnings by as much as 200\% while having an average 15\% decrease in fraud risk scores.

In consideration of publishing this work, we note that our approach has no impact on the ability to evade financial auditing, a regular requirement of large companies. While auditing is not flawless, and there are concerns about reduced auditing effectiveness as the market concentrates on the ``big four'' accounting firms\footnote{\url{https://www.ifiar.org/?wpdmdl=15294}}, it remains a secondary check on malicious behavior. Financial ratios are also not the only source by which fraud is detected. The role of the press is non-trivial in detecting financial fraud~\citep{miller_press_2006}. Executives are also expected to engage with investors in annual meetings, which can be revealing of fraud risk~\citep{larcker_detecting_2012}, and non-participation would pose an even greater red-flag signal. Given these other checks on fraudulent behavior and the skills gap for CFOs and accountants to perform our attack, we believe the interest in disclosing the capability to encourage study and remediation outweighs the risk of it being used without eventual detection. 

In making this discovery public, it is worth noting some of the ways in which we hope our results will lead to positive outcomes. First, we note that this interdisciplinary study required collaboration across multiple disciplines to achieve its goals. Encouraging this collaboration, particularly between government, industry, and academia, to study this problem is an immediate goal. In particular, the use of linear models increases the possibility of developing provable bounds on attack risk and success rates, which can lead to better fiscal oversight and the allocation of regulatory resources. In particular, if regulators can focus their time on cases with the greatest risk of successful fraud, then all parties can ultimately benefit.

\bibliographystyle{ACM-Reference-Format}
\bibliography{references,aaai24,moreRefs}

\end{document}